\theoremstyle{plain}
\newtheorem{theorem}{Theorem}[section]
\newtheorem{lemma}[theorem]{Lemma}
\newtheorem{corollary}[theorem]{Corollary}
\theoremstyle{definition}
\theoremstyle{remark}
\newtheorem{remark}[theorem]{Remark}
\title{Attention Enables Zero Approximation Error}
\author{
 Zhiying Fang \\
  School of Data Science\\
  The Chinese University of Hong Kong, Shenzhen\\
  \texttt{fangzhiying@cuhk.edu.cn} \\
   \And
 Yidong Ouyang \\
  School of Data Science\\
  The Chinese University of Hong Kong, Shenzhen\\
  \texttt{yidongouyang@link.cuhk.edu.cn} \\
  \And
 Ding-Xuan Zhou\\
  School of Data Science and   Department of mathematics\\
  City University of Hong Kong\\
  \texttt{mazhou@cityu.edu.hk} \\
    \And
 Guang Cheng\\
  Department of Statistics\\
  University of California, Los Angeles\\
  \texttt{guangcheng@ucla.edu} \\
}
\begin{document}
\maketitle
\begin{abstract}
Deep learning models have been widely applied in various aspects of daily life. Many variant models based on deep learning structures have achieved even better performances. Attention-based architectures have become almost ubiquitous in deep learning structures. Especially, the transformer model has now defeated the convolutional neural network in image classification tasks to become the most widely used tool. However, the theoretical properties of attention-based models are seldom considered. In this work, we show that with suitable adaptations, the single-head self-attention transformer with a fixed number of transformer encoder blocks and free parameters is able to generate any desired polynomial of the input with no error. The number of transformer encoder blocks is the same as the degree of the target polynomial. Even more exciting, we find that these transformer encoder blocks in this model do not need to be trained. As a direct consequence, we show that the single-head self-attention transformer with increasing numbers of free parameters is universal. These surprising theoretical results clearly explain the outstanding performances of the transformer model and may shed light on future modifications in real applications. We also provide some experiments to verify our theoretical result.
\end{abstract}


\section{Introduction}
By imitating the structure of brain neurons, deep learning models have replaced traditional statistical models in almost every aspect of applications, becoming the most widely used machine learning tools \cite{lecun2015deep,goodfellow2016deep}. Structures of deep learning are also constantly evolving from fully connected networks to many variants such as convolutional networks \cite{krizhevsky2012imagenet}, recurrent networks \cite{mikolov2010recurrent} and the attention-based transformer model \cite{dosovitskiy2020image}. Attention-based architectures were first introduced in the areas of natural language processing, and neural machine translation \cite{bahdanau2014neural,vaswani2017attention,ott2018scaling}, and now an attention-based transformer model has also become state-of-the-art in image classification \cite{dosovitskiy2020image}. However, compared with significant achievements and developments in practical applications, theoretical properties of attention-based transformer models are not well understood.

Let us describe briefly some current theoretical progress of attention-based architectures. The universality of a sequence-to-sequence transformer model is first established in \cite{yun2019transformers}. After that, a sparse attention mechanism, BIGBIRD, is proposed by \cite{zaheer2020big} and the authors further show that the proposed transformer model is universal if its attention structure contains the star graph. Later, \cite{yun2020n} provides a unified framework to analyze sparse transformer models. Recently, \cite{shi2021sparsebert} studies the significance of different positions in the attention matrix during pre-training and shows that diagonal elements in the attention map are the least important compared with other attention positions. From a statistical machine learning point of view, the authors in \cite{gurevych2021rate} propose a classifier based on a transformer model and show that this classifier can circumvent the curse of dimensionality.

The models considered in the above works all contain attention-based transformer encoder blocks. It is worth noting that the biggest difference between a transformer encoder block and a traditional neural network layer is that it introduces an inner product operation, which not only makes its actual performance better but also provides more room for theoretical derivations.

In this paper, we consider the theoretical properties of the single-head self-attention transformer with suitable adaptations. Different from segmenting $x$ into small pieces \cite{dosovitskiy2020image} and capturing local information, we consider a global pre-processing of $x$ and propose a new vector structure of the inputs of transformer encoder blocks. In this structure, in addition to the global information we obtain from data pre-processing, we place a one-hot vector to represent different features through the idea of positional encoding and place a zero vector to store the output values after each transformer encoder block. With such a special design, we can fix all transformer encoder blocks such that no training is needed for them. And it is able to realize the multiplication operation and store values in zero positions. By applying a well-known result in approximation theory \cite{zhou2018deep} stating that any polynomial $Q \in \mathcal{P}_q \left( \mathbb{R}^d\right)$ of degree at most $q$ can be represented by a linear combination of different powers of ridge forms $\xi_k \cdot x$ of $x\in \mathbb{R}^d$, we prove that the proposed model can generate any polynomial of degree $q$ with $q$ transformer encoder blocks and a fixed number of free parameters. As a direct consequence, we show that the proposed model is universal if we let the the number of free parameters and transformer encoder blocks go to infinity. Our theoretical results are also verified by experiments on synthetic data. In summary, the contributions of our work are as follows:
\begin{itemize}
    \item We propose a new pre-processing method that captures global information and a new structure of input vectors of transformer encoder blocks.

    \item With the special structure of input of transformer encoder blocks, we can artificially design all the transformer encoder blocks in a spare way and prove that the single-head self-attention transformer with $q$ transformer encoder blocks and fixed number of free parameters is able to generate any desired polynomial of degree $q$ of the input with no error.

    \item As a direct consequence, we show that the single-head self-attention transformer with increasing numbers of free parameters and transformer encoder block is universal.

    \item We apply our model to noisy regression tasks with synthetic data. Our experiments show that the proposed model performs much better than traditional fully connected neural networks with a comparable number of free parameters.
\end{itemize}

\begin{figure*}[t]
    \label{arc}
	\centering
	\includegraphics[width=.8\textwidth]{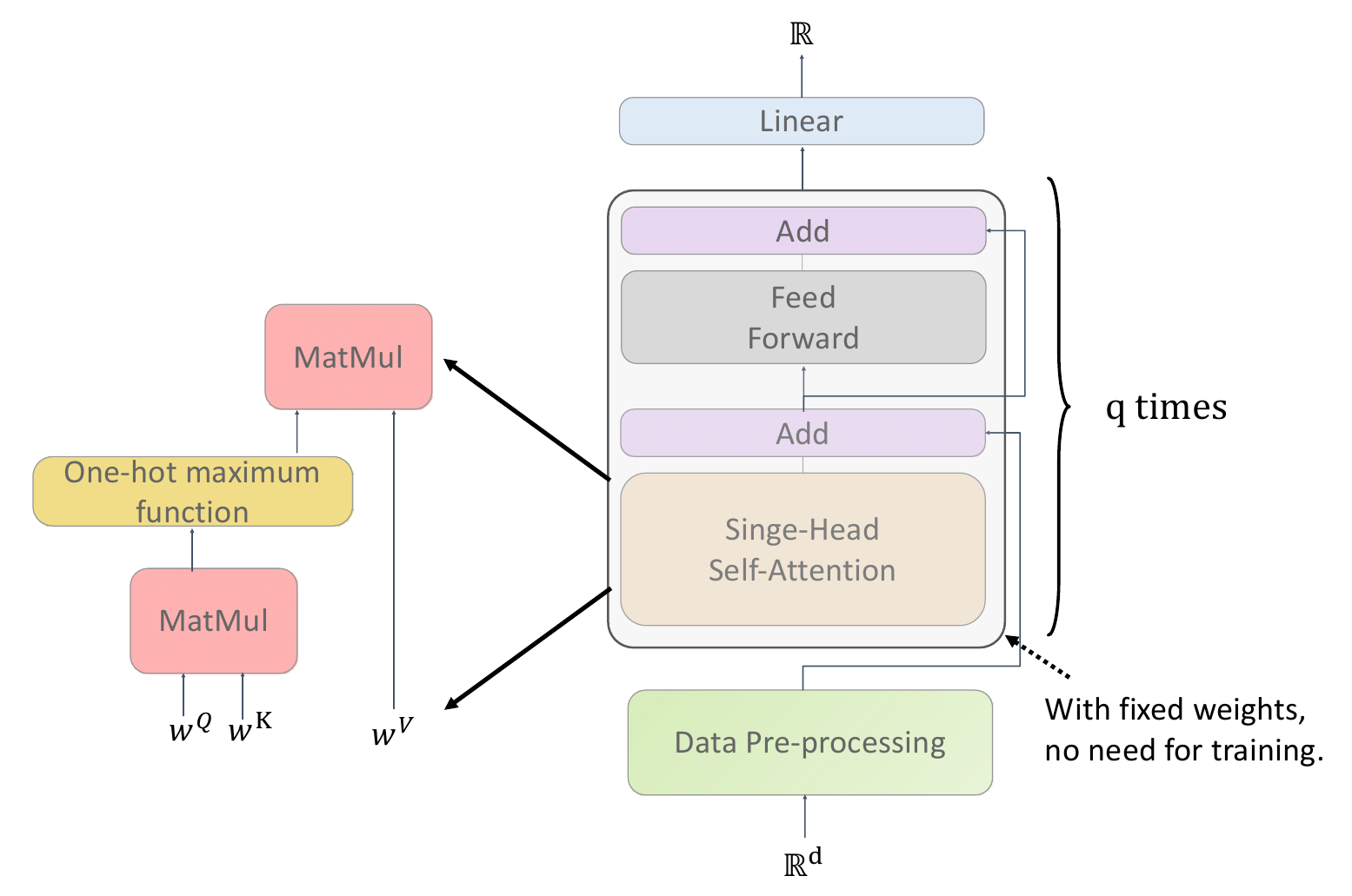}
	\caption{The Architecture of the single-head self-attention transformer. $W^Q, W^K, W^V$ stand for the query matrix, the key matrix, and the value matrix respectively. MatMul stands for the matrix multiplication.}
	\label{fig-motivation}
\end{figure*}

\section{Transformer Structures}
In this section, we formally introduce the single-head self-attention transformer considered in this paper. The overall architecture is shown in Figure \ref{arc}.

\subsection{Data Pre-processing}
\label{processing}
For an input $x \in \mathbb{R}^{d}$ which can be a vector or the concatenation of an image, the usual pre-processing method is to segment it into small pieces and then conduct linear transforms, which can be thought of as extracting local features. However, we propose to directly apply a full matrix $F \in \mathbb{R}^{n \times d}$ to get global features $Fx = t \in \mathbb{R}^{n}$, where $F= [\xi_1, \cdots, \xi_{n} ]^\top$ with $\xi_i \in \mathbb{R}^{d}$ and $\left\|\xi_i\right\|\leq 1$. The matrix $F$ is obtained through the training process. Then we have $n$ global features $t_i = \langle \xi_i , x \rangle$ of the input $x$. Now we introduce the structure of inputs for transformer encoder blocks as follows,
$$z_i = [t_i, \overbrace{0,\cdots,0,\underbrace{1}_{(i+1)-\text{th entry}},0,\cdots,0}^{n}, \overbrace{0,\cdots,0}^q ,1 ]^\top,$$
for $i=1,\cdots,n$. Each one of them is a sparse vector in $\mathbb{R}^{n+q+2}$ and all the $n$ vectors are inputs for the transformer encoder blocks.
As we have covered before, we put a one-hot vector of dimension $n$ inside $z_i$ representing different features $t_i$ of the input $x$ which is similar to the idea of positional encoding. And we also place a $q$ dimensional zero vector to store outputs from each transformer encoder block. At the last position, we place a constant $1$ for the ease of computation in transformer encoder blocks.
We use
$$\mathcal{F}(x) : \mathbb{R}^d \rightarrow \mathbb{R}^{(n+q+2) \times (n)}$$
to denote the above transformation such that $$\mathcal{F}(x) = [z_1, \cdots,z_n].$$
\subsection{Single-Head Self-Attention Transformer Encoder Blocks}
One transformer encoder block contains a self-attention layer and a fully connected layer with a linear transformation. In the self-attention layer, we have one query matrix
$$W^Q \in \mathbb{R}^{(n+1) \times (n+q+2)},$$
one key matrix
$$W^K\in \mathbb{R}^{(n+1) \times (n+q+2)},$$
and one value matrix
$$W^V\in \mathbb{R}^{(n+q+2) \times (n+q+2)}.$$

For every input $z_i$, we calculate the query vector
$$q_i=W^{Q} z_i\in \mathbb{R}^{n+1},$$
the key vector
$$k_i=W^{K} z_i \in \mathbb{R}^{n+1},$$
and the value vector
$$v_i=W^{V} z_i \in \mathbb{R}^{n+q+2}.$$
With all these values, we have $n$ attention vectors
$$\alpha_i= [\langle q_i,k_1\rangle, \cdots ,\langle q_i, k_i \rangle, \cdots,\langle q_i, k_n \rangle]^\top \in \mathbb{R}^{n}.$$
In our proposed model, the softmax function in the self-attention layer is replaced by a one hot maximum function $\hat{m}(\alpha_i): \mathbb{R}^{n} \rightarrow \mathbb{R}^{n}$ which keeps the largest value unchanged and sets the other values to $0$.
We use the notation
$$\mathcal{A}_{W^Q,W^K,W^V} : \mathbb{R}^{(n+q+2) \times n} \rightarrow \mathbb{R}^{(n+q+2) \times n}$$ to denote the mapping of the self-attention layer.
Then the output of the self-attention layer is given by
$$\mathcal{A}_{W^Q,W^K,W^V}(z_1, \cdots, z_n) = [\hat{z}_1, \cdots, \hat{z}_n],$$
where
$$\hat{z}_i = z_i + W^V Z\hat m({\alpha}_i),$$
with $Z = [z_1, \cdots, z_n]$.

The fully connected layer with a linear transformation contains two matrices
$$W_1 \in \mathbb{R}^{2 \times (n+q+2)},$$
and
$$W_2 \in \mathbb{R}^{(n+q+2) \times 2},$$
and two bias vectors $b_1\in \mathbb{R}^{2}$, $b_2\in \mathbb{R}^{n+q+2}$. We use the notation
$$\mathcal{B}_{W_1,W_2,b_1,b_2} : \mathbb{R}^{(n+q+2) \times n} \rightarrow \mathbb{R}^{(n+q+2) \times n}$$
to denote the mapping of the fully connected layer with a linear transformation. Then we have
$$\mathcal{B}_{W_1,W_2,b_1,b_2}(\hat{z}_1, \cdots, \hat{z}_n) = [z'_1, \cdots, z_n'],$$
where
$$z'_i = \hat{z}_i + W_2 \sigma\left(W_1\hat{z}_1+b_1\right) +b_2,$$
and $\sigma$ is the ReLU activation function acting component-wise.

Now we define our single-head self-attention transformer model with $\ell$ transformer encoder blocks as
$$\mathcal{T}^\ell(x) = \mathcal{B}^\ell \circ \mathcal{A}^\ell \circ \cdots \circ \mathcal{B}^1 \circ \mathcal{A}^1 \circ \mathcal{F} (x),$$
where $\mathcal{F}, \mathcal{A}^i, \mathcal{B}^i$ are the mappings defined above.
We further concatenate the output matrix into one vector and apply a linear transformation with a bias term to get our final output, that is,
$$ \mathcal{C}^\ell(x) = \beta \cdot \textbf{concat}\left( \mathcal{T}^\ell(x)\right) + b,$$
with $\beta \in \mathbb{R}^{n(n+q+2)}$ and $b \in \mathbb{R}$. We require the vector $\beta$ to possess a sparse structure which will be shown in the proof. The values in $\beta$ and $b$ are obtained through the training process. The layer normalization is not considered in our model.

\section{Main Results}
In this section, we present our main result showing that the single-head self-attention transformer model can generate any desired polynomial with a fixed number of transformer encoder blocks and free parameters. Before stating our main theorem, we first present two important lemmas. For the following lemma, we construct a sparse single-head self-attention block with fixed design which is able to realize the multiplication operation and store different products in the output vectors simultaneously.
\begin{lemma}\label{main result:one layer}
For all $n$ input vectors in the form of
$$z_i = [t_i, e_i, \overbrace{x_i,y_i,0,\cdots,0}^q ,1 ]^\top \in \mathbb{R}^{(n+q+2) \times 1},$$
with $t_i, x_i, y_i \in \mathbb{R}$ and absolute values bounded by some known constant $M$ for $i=1,\cdots,n$, there exists a sparse single-head self-attention transformer encoder block with fixed matrices $W^Q$, $W^K$, $W^V$, $W_1$, $W_2$ and vectors $b_1$, $b_2$ that can produce output vectors as
$$z'_i = [t_i,e_i, \overbrace{x_i,y_i,-x_iy_i,0,\cdots,0}^q ,1 ]^\top \in \mathbb{R}^{(n+q+2) \times 1},$$
where $e_i$ denotes the one-hot vector of dimension $n$ with value $1$ in the $i$-th position of $e_i$. The softmax function is replaced by one hot maximum function. The number of non-zero entries is $2n+8$.
\end{lemma}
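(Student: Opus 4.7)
The plan is to exploit the attention mechanism's bilinear inner product $\langle q_i, k_j\rangle$ to compute the product $x_i y_i$, combined with a large bias that couples the one-hot encodings $e_i, e_j$ so as to force the one-hot maximum to pick the diagonal index $j=i$; a constant shift supplied by the feedforward layer then cancels the extra offset.

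I would first fix a constant $\tilde M > 2M^2$ and design $W^Q$ and $W^K$ so that
$$q_i = (\sqrt{\tilde M}\, e_i,\; -x_i)^\top \in \mathbb{R}^{n+1}, \qquad k_j = (\sqrt{\tilde M}\, e_j,\; y_j)^\top \in \mathbb{R}^{n+1}.$$
Concretely, $W^Q$ carries $\sqrt{\tilde M}$ in the $n$ entries that read coordinates $2,\dots,n+1$ of $z_i$ (the $e_i$ block) and a single $-1$ that reads $x_i$ at coordinate $n+2$; the matrix $W^K$ is analogous, with $+1$ reading $y_j$ at coordinate $n+3$. A direct computation then gives $\langle q_i, k_j\rangle = \tilde M\,\delta_{ij} - x_i y_j$, and because $-x_i y_j \le M^2 < \tilde M - M^2 \le \tilde M - x_i y_i$, this inner product attains its strict maximum at $j=i$ with value $\tilde M - x_i y_i$. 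Hence $\hat m(\alpha_i)$ is supported only at index $i$, carrying that scalar as its unique nonzero entry.

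Next I would choose $W^V$ with a single nonzero entry $W^V_{n+4,\, n+q+2} = 1$, which reads the constant $1$ at the last coordinate of every $z_j$ and inserts it into coordinate $n+4$, so that $W^V z_j = e_{n+4}$ and consequently $W^V Z\,\hat m(\alpha_i) = (\tilde M - x_i y_i)\, e_{n+4}$. After the residual connection we obtain $\hat z_i = z_i + (\tilde M - x_i y_i)\, e_{n+4}$, which agrees with $z_i$ outside coordinate $n+4$. In the feedforward layer I would take $W_1, W_2, b_1$ to be zero and $b_2 = -\tilde M\,e_{n+4}$, so that $z'_i = \hat z_i + b_2$ places $-x_i y_i$ precisely in coordinate $n+4$ while leaving the $e_i$ block, the $x_i, y_i$ slots, the trailing $1$, and every other entry untouched.

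The hardest part is not any single inequality but the bookkeeping of coordinates across the blocks of $z_i$: ensuring that the single nonzero column of $W^V$ lands in the intended $(n+4)$-th coordinate, and that the attention contribution never disturbs the $e_i$ block or the trailing constant. The quantitative choice $\tilde M > 2M^2$ is precisely what guarantees that $\hat m(\alpha_i)$ locks onto the diagonal; with that in place the remaining verifications reduce to direct matrix arithmetic, and the nonzero entries of $W^Q$, $W^K$, $W^V$ and $b_2$ together total $O(n)$ as claimed.
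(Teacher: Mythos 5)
Your construction is correct, but it routes around the feedforward sublayer in a way the paper does not, so it is worth comparing. The paper puts the margin constant asymmetrically into the one-hot slots (a $2M^2$ in $W^Q$ against a $1$ in $W^K$) and reads $+x_i$ into the query, so the winning diagonal score is $x_iy_i+2M^2$; it then genuinely needs the ReLU block, with $W_1$ extracting $\pm(x_iy_i+2M^2)$, $W_2$ scaling by $-2$ and $2$, and $b_2=2M^2\,e_{n+4}$, to convert the stored value $x_iy_i+2M^2$ into $-x_iy_i$ (the positivity $x_iy_i+2M^2\ge M^2>0$ makes the ReLU act as the identity on the active branch). You instead build the sign flip into the attention itself by reading $-x_i$ into the query and coupling the one-hot blocks symmetrically with $\sqrt{\tilde M}$, so the diagonal score is already $\tilde M-x_iy_i$ and only the constant bias $b_2=-\tilde M\,e_{n+4}$ is needed; your inequality chain $-x_iy_j\le M^2<\tilde M-M^2\le\tilde M-x_iy_i$ correctly forces the one-hot maximum onto $j=i$, and you correctly use the fact that $\hat m$ retains the winning \emph{value}, not just its index. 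Both arguments are valid; yours is slightly cleaner and sparser (about $2n+4$ nonzero entries versus the paper's $2n+8$, since $W_1$ and $W_2$ vanish), which means your construction does not literally reproduce the count ``$2n+8$'' asserted in the lemma statement --- that figure describes the paper's particular construction, and a sparser block only strengthens the result, but you should state your exact count rather than leaving it at $O(n)$. One small caveat: the margin you need is $\tilde M> x_iy_i-x_iy_j$ for all $i\neq j$, which your choice $\tilde M>2M^2$ guarantees with strict inequality, so the maximum is unique and $\hat m$ is well defined.
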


\begin{remark}
The above lemma shows that a fixed single-head self-attention transformer encoder block is able to simultaneously calculate the product of two elements in all $n$ input vectors within the same two entries and store the negative value in the same $0$ positions. Since the construction is fixed, these transformer encoder blocks in the whole model do not need to be trained.
\end{remark}
Now we introduce a well-known result in approximation theory showing that any polynomial function $Q \in \mathcal{P}_q \left( \mathbb{R}^d\right)$ of degree at most $q$ can be represented by a linear combination of different powers of ridge forms $\xi_k \cdot x$ of $x\in \mathbb{R}^d$. The following lemma is first presented and proved in \cite{zhou2018deep} and also plays an important role in the analysis of deep convolutional neural networks \cite{zhou2020universality,mao2021theory}.
\begin{lemma}\label{lemma: polynomial}
Let $d \in \mathbb{N}$ and $q \in \mathbb{N}$. Then there exists a set $\left\{ \xi_k\right\}_{k=1}^{n_q} \subset \left\{ \xi \in \mathbb{R}^d : \left\|\xi\right\|=1\right\}$ of vectors with $\ell_2-$norm $1$ such that for any $Q \in \mathcal{P}_q \left( \mathbb{R}^d\right)$ we can find a set of coefficients $\left\{\beta_{k, s}: k=1,\cdots, n_q, s = 1,\cdots, q\right \} \subset \mathbb{R}$ such that
\begin{equation}
\begin{aligned}
Q(x) = Q(0) + \sum_{k=1}^{n_q} \sum_{s=1}^q \beta_{k,s} \left( \xi_k \cdot x\right)^s, ~~~x \in \mathbb{R}^d,
\end{aligned}
\end{equation}
where $n_q = \binom{d-1+q}{q}$ is the dimension of $\mathcal{P}^h_q(\mathbb{R}^d),$ the space of homogeneous polynomials on $\mathbb{R}^d$ of degree $q$.
\end{lemma}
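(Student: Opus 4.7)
My plan is to decompose $Q$ into its homogeneous components and reduce the claim to representing each component as a linear combination of $s$-th ridge powers $(\xi_k\cdot x)^s$ using a single, well-chosen family $\{\xi_k\}_{k=1}^{n_q}$ on the unit sphere. Concretely, I would first write $Q(x)=Q(0)+\sum_{s=1}^{q}Q_s(x)$ with $Q_s\in\mathcal{P}^h_s(\mathbb{R}^d)$ the degree-$s$ homogeneous part, so that it suffices to show that for one and the same tuple $\{\xi_k\}$ the ridge powers $\{(\xi_k\cdot x)^s\}_{k=1}^{n_q}$ span $\mathcal{P}^h_s(\mathbb{R}^d)$ for every $s\in\{1,\ldots,q\}$. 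Once this spanning property is in place, I can select coefficients $\beta_{k,s}$ with $Q_s(x)=\sum_{k=1}^{n_q}\beta_{k,s}(\xi_k\cdot x)^s$, sum over $s$, and add $Q(0)$ to obtain the desired identity.

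For a fixed degree $s$, I would expand $(\xi\cdot x)^s=\sum_{|\alpha|=s}\binom{s}{\alpha}\xi^\alpha x^\alpha$ and collect coefficients to form the $n_q\times\binom{d-1+s}{s}$ matrix $M_s(\xi_1,\ldots,\xi_{n_q})$ whose entries are polynomials in the coordinates of the $\xi_k$. The spanning property for degree $s$ is equivalent to $M_s$ having full column rank $\binom{d-1+s}{s}$. This rank is achievable: otherwise the entire family $\{(\xi\cdot x)^s:\xi\in\mathbb{R}^d\}$ would lie in a proper subspace of $\mathcal{P}^h_s(\mathbb{R}^d)$, producing a nonzero $R\in\mathcal{P}^h_s$ with $R(\xi)\equiv 0$, which is impossible; since $n_q\ge\binom{d-1+s}{s}$, enough rows are available.

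The hard part will be to synchronize this across all $s$ with a single tuple. I would argue by Zariski density: for each $s$, the failure set $V_s=\{(\xi_1,\ldots,\xi_{n_q})\in(S^{d-1})^{n_q}:\operatorname{rank}M_s<\binom{d-1+s}{s}\}$ is the common zero locus of the maximal minors of $M_s$, hence a Zariski-closed real-algebraic subvariety, and by the previous paragraph it is proper. Consequently $\bigcup_{s=1}^{q}V_s$ is also proper and closed, so its complement is dense, and any tuple in this complement provides the desired $\{\xi_k\}_{k=1}^{n_q}$.

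The main obstacle I anticipate is precisely this simultaneous step: a priori, a configuration winning full rank for $M_s$ could fall into $V_{s'}$ for another degree $s'\ne s$, so one cannot naively iterate the single-degree argument. The density argument for a finite union of proper algebraic subvarieties resolves this cleanly, after which the rest of the proof reduces to reading off coefficients in the chosen ridge basis, degree by degree.
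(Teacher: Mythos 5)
Your argument is correct, but note that the paper does not actually prove this lemma: it is imported verbatim from \cite{zhou2018deep}, so the comparison must be with that source's proof rather than with anything in this manuscript. The standard proof and yours diverge at exactly the point you flag as "the hard part." The cited argument only establishes the spanning property at the \emph{top} degree: the classical duality $\langle R, (\xi\cdot x)^q\rangle = R(\xi)$ (equivalently, your multinomial expansion) shows that $\{(\xi\cdot x)^q:\xi\in S^{d-1}\}$ spans $\mathcal{P}^h_q(\mathbb{R}^d)$, so one extracts a basis $\{(\xi_k\cdot x)^q\}_{k=1}^{n_q}$; the lower degrees then come for free, because if some nonzero $R\in\mathcal{P}^h_s$ annihilated all $(\xi_k\cdot x)^s$, i.e.\ $R(\xi_k)=0$ for all $k$, then $R(x)\,x_1^{q-s}\in\mathcal{P}^h_q$ would also vanish at every $\xi_k$, contradicting the linear independence of the point evaluations at the $\xi_k$ on $\mathcal{P}^h_q$. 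This multiplication trick dissolves the synchronization problem that you instead resolve by genericity, and it produces a concrete recipe (any basis at degree $q$ works) rather than a "generic tuple works" statement. Your route is sound as written, with one small caveat: invoking Zariski density on the real variety $(S^{d-1})^{n_q}$ requires knowing it is irreducible (true for $d\ge 2$, and $d=1$ is trivial since $n_q=1$); it is cleaner to observe that each $V_s$ is contained in the zero set of a single maximal minor that is not identically zero on the connected real-analytic manifold $(S^{d-1})^{n_q}$, hence has measure zero, so the finite union cannot exhaust the product of spheres. With that substitution your proof is complete; what the standard approach buys is the elimination of any genericity argument, and what yours buys is that it works degree by degree without needing the top-degree family to be exactly a basis.
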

\begin{remark}
The above lemma shows that any polynomial $Q \in \mathcal{P}_q \left( \mathbb{R}^d\right)$ can be uniquely determined by $Q(0)$, $\beta_{k,s}$ and $\xi_k$. So by applying the above lemma, we can perfectly reproduce any polynomial with proper construction.
\end{remark}

Now we are ready to state our main result on the single-head self-attention transform model.

\begin{theorem}\label{main result:polynomial}
Let $B>0$ and $q \in \mathbb{N}$. For any polynomial function $Q\in \mathcal{P}_q (\mathbb{R}^d)$ of degree at most $q$, there exist a single-head self-attention transformer model with $q$ transformer encoder blocks such that the output function $\mathcal{C}^q$ equals $Q$ on $\left\{x\in \mathbb{R}^d: \left\|x\right\| \leq B\right\}$
$$\mathcal{C}^q(x) = Q(x),~~\forall \left\|x\right\| \leq B.$$
The number of free parameters is less then $d^{q+1}+ qd^q +1$ which comes from $F$, $\beta$ and $b$. The number of non-zero entries in this model is less than $d^{q+1} +3qd^q + 8q  +1.$
\end{theorem}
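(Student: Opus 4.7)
The plan is to combine the polynomial representation from Lemma~\ref{lemma: polynomial} with an iterative application of the multiplication gadget from Lemma~\ref{main result:one layer} to compute all ridge powers $(\xi_k\cdot x)^s$ for $1\le s\le q$, and then read them out through a sparse final linear layer.

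First I would invoke Lemma~\ref{lemma: polynomial} on the given $Q\in\mathcal{P}_q(\mathbb{R}^d)$ to obtain unit vectors $\xi_1,\ldots,\xi_{n_q}$ and coefficients $\{\beta_{k,s}\}$ with $Q(x)=Q(0)+\sum_{k=1}^{n_q}\sum_{s=1}^{q}\beta_{k,s}(\xi_k\cdot x)^s$. I would then set $n=n_q$ and take the pre-processing matrix $F=[\xi_1,\ldots,\xi_{n_q}]^\top$, so that after applying $\mathcal{F}$ each token $z_i$ holds $t_i=\xi_i\cdot x$ in its first coordinate, with $|t_i|\le\|\xi_i\|\,\|x\|\le B$ giving the boundedness hypothesis of Lemma~\ref{main result:one layer} (with $M=B$).

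Next I would design the $q$ encoder blocks so that after the $s$-th block the $s$-th of the $q$ originally zero slots of every $z_i$ contains a known signed copy of $t_i^{s+1}$. The first block applies a variant of the construction of Lemma~\ref{main result:one layer} in which both multiplicands are read from the first coordinate, so that $-t_i^2$ is placed in the first zero slot. For $s\ge 2$ the $s$-th block is an instance of Lemma~\ref{main result:one layer} in which the already-computed power $\pm t_i^{s}$ sitting at the fixed, $i$-independent position produced by the previous block plays the role of $x_i$, while $t_i$ in the first coordinate plays the role of $y_i$; the block writes $(-1)^s t_i^{s+1}$ into the next zero slot. Because the one-hot component $e_i$ ensures the one-hot-maximum attention always selects the $i$-th token, the sparse $W^Q,W^K,W^V$ from Lemma~\ref{main result:one layer} transfer verbatim after a purely combinatorial relabeling of which coordinates are read and written. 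After $q$ blocks each $z_i$ therefore contains $t_i,t_i^2,\ldots,t_i^q$ (with known signs) in fixed positions.

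Finally I would set $\mathcal{C}^q(x)=\beta\cdot\textbf{concat}(\mathcal{T}^q(x))+b$ so that the entry of $\beta$ corresponding to the coordinate of $z_k$ holding $t_k^s$ equals $\beta_{k,s}$ up to the sign accumulated by the iterated $-x_iy_i$ products, with every other entry of $\beta$ set to zero, and take $b=Q(0)$. Lemma~\ref{lemma: polynomial} immediately yields $\mathcal{C}^q(x)=Q(x)$ on $\|x\|\le B$. For the counts: $F$ contributes $n_q d\le d^{q+1}$ free parameters, $\beta$ has at most $n_q q\le q d^q$ nonzero entries, and $b$ is one scalar, giving the stated $d^{q+1}+qd^q+1$ free-parameter bound; the $q$ encoder blocks are fixed and by Lemma~\ref{main result:one layer} each has at most $2n+8$ nonzero entries, contributing another $2qd^q+8q$ nonzero entries, which combined with $F$, $\beta$ and $b$ gives the stated $d^{q+1}+3qd^q+8q+1$. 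The main obstacle will be the bookkeeping of the iteration: ensuring the slot storing the previous power is the same across all $n$ tokens so that a single triple $W^Q,W^K,W^V$ works in each block, tracking the alternating signs from Lemma~\ref{main result:one layer}'s $-x_iy_i$ output, and verifying that the sparse attention pattern built there survives the shifting of relevant coordinates in successive blocks without interference from the partial products already written.
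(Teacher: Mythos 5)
Your proposal follows essentially the same route as the paper: invoke Lemma~\ref{lemma: polynomial} to fix $F=[\xi_1,\ldots,\xi_{n_q}]^\top$, iterate the multiplication gadget of Lemma~\ref{main result:one layer} (with repositioned non-zero entries) to accumulate the signed powers $\pm t_i^s$ in the $q$ reserved slots, and read them out with a sparse $\beta$ and $b=Q(0)$, arriving at the same parameter and non-zero-entry counts. The only bookkeeping you leave implicit --- and that the paper makes explicit --- is that the constant $M$ in the gadget cannot stay at $B$ across blocks but must grow to $B^{i-1}$ in the $i$-th block since one multiplicand is an accumulated power (and the paper's first block multiplies $t_i$ by the stored constant $1$ to copy $-t_i$ rather than squaring it directly), neither of which affects correctness.
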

\begin{remark}
The above theorem shows a very strong property of the self-attention transformer model that it can generate any desired polynomial with a finite number of free parameters. As we can see, the degree of the polynomial is reflected in the number of transformer encoder blocks, showing that the more blocks the transformer has, the more complex polynomial it can represent. Clearly, this result outperforms that of the other classical deep learning models without attention-based structure in at least two aspects. First, since the linear combination of the output units of traditional ReLU neural networks is only a piece-wise linear function of the input, no matter how many finite layers and free parameters, it can never produce a polynomial of the input with no error. Second, the transformer encoder blocks in our construction only serves as the realization of the multiplication operation. The non-zero values are all pre-designed constants, so no training is needed for these blocks. We only need to train free parameters in $F$, $\beta$ and $b$.
\end{remark}

As a direct consequence of the above result, the proposed single-head self-attention transform model is universal.
\begin{corollary}
Let $d\in \mathbb{N}$ and $q \in \mathbb{N}$. For any bounded continuous function $f$ on $[0,1]^d$, there exists a single-head self-attention transformer with increasing numbers of free parameters and transformer encoder blocks such that
$$\lim_{q \rightarrow \infty} \left\|\mathcal{C}^q - f\right\|_{C([0,1]^d)} = 0$$
\end{corollary}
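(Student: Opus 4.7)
The plan is to reduce the universality statement to a polynomial approximation problem and then invoke Theorem \ref{main result:polynomial} as a black box. Since $[0,1]^d$ is compact and $f$ is continuous on it, the classical Weierstrass (or Stone--Weierstrass) theorem guarantees that for every $\epsilon>0$ there exists a polynomial $Q_\epsilon \in \mathcal{P}_{q_\epsilon}(\mathbb{R}^d)$ of some degree $q_\epsilon \in \mathbb{N}$ such that $\|f - Q_\epsilon\|_{C([0,1]^d)} < \epsilon$. The strategy is to construct the approximating transformer from this polynomial by applying the exact-representation result already established.

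Concretely, I would fix $B = \sqrt{d}$ so that $[0,1]^d \subset \{x \in \mathbb{R}^d : \|x\| \leq B\}$. Given $\epsilon > 0$, pick $Q_\epsilon$ as above and invoke Theorem \ref{main result:polynomial} with this choice of $B$ and $q = q_\epsilon$ to obtain a single-head self-attention transformer $\mathcal{C}^{q_\epsilon}$ whose output function satisfies $\mathcal{C}^{q_\epsilon}(x) = Q_\epsilon(x)$ for every $x$ with $\|x\| \leq B$, and hence in particular for every $x \in [0,1]^d$. The triangle inequality then gives
\begin{equation*}
\|\mathcal{C}^{q_\epsilon} - f\|_{C([0,1]^d)} \;=\; \|Q_\epsilon - f\|_{C([0,1]^d)} \;<\; \epsilon.
\end{equation*}

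To turn this into the limit form stated in the corollary, I would take a decreasing sequence $\epsilon_m \downarrow 0$, obtain a corresponding sequence of polynomial degrees $q_m := q_{\epsilon_m}$ (without loss of generality strictly increasing, by replacing $Q_{\epsilon_m}$ with a polynomial of higher degree having the same or better accuracy if necessary), and let $\mathcal{C}^{q_m}$ be the transformer that exactly reproduces $Q_{\epsilon_m}$. Then $\|\mathcal{C}^{q_m} - f\|_{C([0,1]^d)} < \epsilon_m \to 0$, which gives the claimed convergence as $q \to \infty$ along this subsequence. The number of free parameters, governed by the bound $d^{q+1} + q d^q + 1$ from Theorem \ref{main result:polynomial}, grows with $q_m$, matching the phrase ``increasing numbers of free parameters and transformer encoder blocks'' in the statement.

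There is essentially no hard step here: both ingredients (Weierstrass density of polynomials in $C([0,1]^d)$, and zero-error polynomial realization by the transformer) are already in hand. The only point that requires a little care is matching the ball on which Theorem \ref{main result:polynomial} produces exact equality with the cube $[0,1]^d$ on which the sup-norm in the corollary is taken, which is resolved by the choice $B = \sqrt{d}$. Everything else is a routine composition of the two results.
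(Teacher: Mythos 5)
Your proposal is correct and follows exactly the route the paper intends: the paper's own justification is the one-line remark that the corollary is ``a simple application of the denseness of the polynomial set,'' i.e., Weierstrass density plus the exact polynomial realization of Theorem \ref{main result:polynomial}. Your write-up simply supplies the details the paper omits (the choice $B=\sqrt{d}$ to cover $[0,1]^d$ and the passage to a sequence $\epsilon_m \downarrow 0$), and these are handled correctly.
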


The above result is a simple application of the denseness of the polynomial set, which shows that the transformer model discussed in our paper is universal if we let the number of free parameters and transformer encoder blocks go to infinity.

\section{Comparison and Discussion}
In this section, we compare our work with some existing theoretical results on the transformer model \cite{yun2019transformers,yun2020n,zaheer2020big,shi2021sparsebert}. Since these works use similar methods to those in \cite{yun2019transformers}, we focus on the theoretical contributions of this paper.

In \cite{yun2019transformers}, the authors show that transformer models are universal approximators of continuous sequence-to-sequence functions with compact support with trainable positional encoding. The notion of contextual mappings is also formalized, and it is shown that the attention layers can compute contextual mappings, where each unique context is mapped to a unique vector.

The universality result is achieved in three key steps: \textbf{Step 1}. Approximate continuous permutation equivariant functions with piece-wise constant functions $\bar{\mathcal{F}}_{PE}(\delta)$. \textbf{Step 2}. Approximate $\bar{\mathcal{F}}_{PE}(\delta)$ with modified Transformers $\bar{\mathcal{T}}$. \textbf{Step 3}. Approximate modified Transformers $\bar{\mathcal{T}}$ with original Transformers $\mathcal{T}$.

In order to express the above steps more clearly, we show the idea of proof as follows. For an input $X \in\mathbb{R}^{d\times n}$, the authors first use a series of feed-forward layers that can quantize $X$ to an element $L$ on the extended grid $\mathbb{G}^+_\delta := \left\{-\delta^{-nd},0,\delta,\cdots,1-\delta\right\}^{d\times n}$. Activation functions that are applied to these layers are piece-wise linear functions with at most three pieces, and at least one piece is constant. Then, the authors use a series of self-attention layers in the modified transformer network to implement a contextual mapping $q(L)$. After that, a series of feed-forward layers in the modified transformer network can map elements of the contextual embedding $q(L)$ to create a desired approximator $\bar{g}$ of the piece-wise constant function $\bar{f} \in \bar{\mathcal{F}}_{PE}(\delta)$ which is the approximator of the target function.

We would like to address major differences between our work and theirs. First, the output functions are different. In the above work, the goal is to approximate a continuous function defined from $\mathbb{R}^{n\times d}$ to $\mathbb{R}^{n\times d}$, which focuses on sequence-to-sequence functions. In our setting, we use the linear combination of the units in the last layer as our output, which focuses on regression and classification tasks. Second, the two structures we consider are slightly different. The self-attention layers and feed-forward layers in their transformer model are set in an alternate manner. Although this may explain the different functions of different types of layer, it changes the structure of transformer model in real applications. In our setting, we guarantee the integrity of transformer encoder blocks and analyze each transformer encoder block as a whole. Last but not least, the ultimate goals and core ideas of the theoretical analysis of our two papers are different. Because the inner product operation is the biggest difference between the attention layer and the traditional network layer, we focus on this special structure for analysis. We find that if we can make good use of this inner product structure, then from the perspective of theoretical analysis, we do not have to think about approximation but can directly generate the function we want. And the exact construction only requires a finite number of free parameters with fixed transformer encoder blocks. This shows the different thinking in our theory and distinguishes our method from using piece-wise functions to approximate target functions.

\begin{figure*}[t]
	\centering
  \subfigure[ATTENTION with MSE 0.02. ]{\includegraphics[width=.30\textwidth]{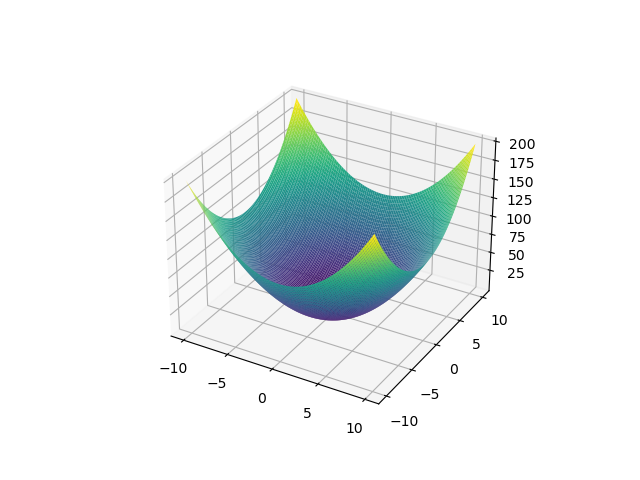}}
\subfigure[NN$_{depth}$ with MSE 134.23.]{	\includegraphics[width=.30\textwidth]{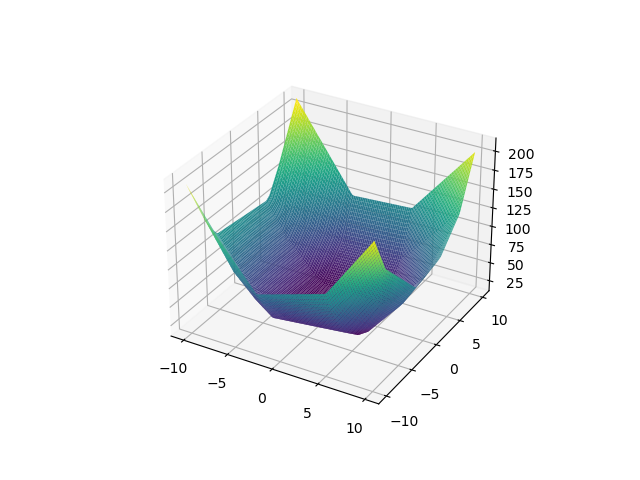}}
\subfigure[NN$_{width}$ with MSE 10237.27.]{\includegraphics[width=.30\textwidth]{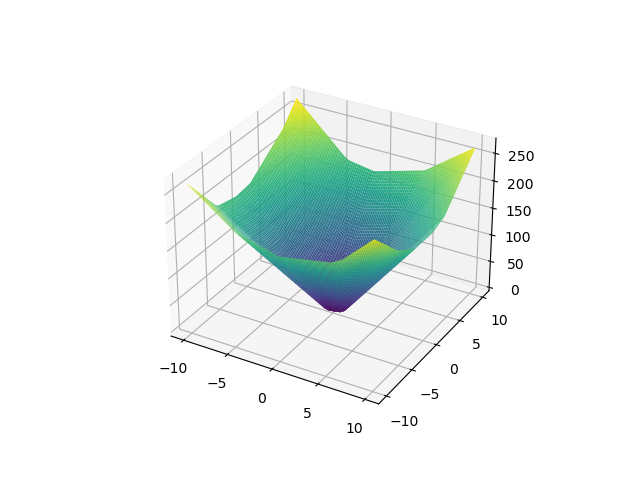}}
	\caption{For the target polynomial $f^*_1$, the above 3-D surface plots are output functions of three different models after the training process. ATTENTION stands for our single-head self-attention transformer model, while NN$_{depth}$ and NN$_{width}$ stand for fully connected neural networks illustrated in experimental setting. MSE stands for the Mean Squared Error evaluated at testing data.}
	\label{f1}
\end{figure*}

\section{Experiments: Learning Polynomial Functions}
\label{experiments}
In this section, we verify our main results and demonstrate the superiority of our single-head self-attention transformer model by conducting experiments on two groups of synthetic data.
\paragraph{Target functions}
For these two experiments, we consider the noisy regression task
$$y =  f^*(\textbf{x}) + \epsilon,$$
where $f^*$ is the target polynomial and $\epsilon$ is the standard normal noise.

For the first experiment, in order to visualize the advantages of our proposed model, we consider a simple polynomial,
\begin{equation*}
   \begin{aligned}
   f^*_1(\textbf{x}) = x_1^2 + x_2^2,
   \end{aligned}
\end{equation*}
which satisfies $d=2$ and $q=2$.

For the second experiment, to show the strong expressiveness of our model, we consider a complicated polynomial
\begin{equation*}
   \begin{aligned}
   &f^*_2(\textbf{x}) =\\
   &x_1^5+3x_2^4+2x_3^3+5x_3x_4+3x_5^2+2x_6x_7x_8+2x_9,
   \end{aligned}
\end{equation*}
which satisfies $d=10$ and $q=5$.
\paragraph{Data generating process}
For the target function $f^*_1$, we generate 10000 i.i.d. sample $\textbf{x}$ from a multivariate Gaussian distribution $\mathcal{N}(\textbf{0}, \Sigma_1)$ with $\Sigma_1 = \text{diag}(100,100)$. We randomly choose 9000 of them for training and 1000 data for testing. \\

For the target function $f^*_2$, we generate 50000 i.i.d. sample $\textbf{x}$ from a multivariate Gaussian distribution $\mathcal{N}(\textbf{0}, \Sigma_2)$ with $\Sigma_2 = \text{diag}(1,\cdots,1) \in \mathbb{R}^{10\times 10}$. We randomly choose 45000 of them for training and 5000 data for testing.

\paragraph{Experimental setting}
To demonstrate the power of attention-based structures, we compare our proposed model with two types of ReLU fully connected neural networks with a comparable number of free parameters. Since for a polynomial $Q$ of degree $q$, our proposed model has one linear transformation with matrix $F\in\mathbb{R}^{n_q \times d}$ and $q$ transformer encoder blocks, we use NN$_{depth}$ to denote the fully connected network with $q+1$ layers and we use NN$_{width}$ to denote the shallow net with $n_q$ units in the hidden layer. For these two fully connected networks, we use the same way as our proposed model to generate output value, which is the linear combination of units in the last layer with a bias term. The detailed architectures can be found in \ref{app_exp}. \\

In all the experiments, we use SGD optimizer with one cycle learning rate \cite{Smith2019SuperconvergenceVF}, with an initial learning rate 0.0001 and maximum learning rate 0.001. For the polynomial $f_1^*$, we train three models 600 epochs with batch size 5000, and for the polynomial $f_2^*$, we train three models 2000 epochs with batch size 25000. The gradient clipping is used for all three models to avoid gradients exploding at the beginning of training.

\begin{figure}[h]

	\centering
	\includegraphics[width=.45\textwidth]{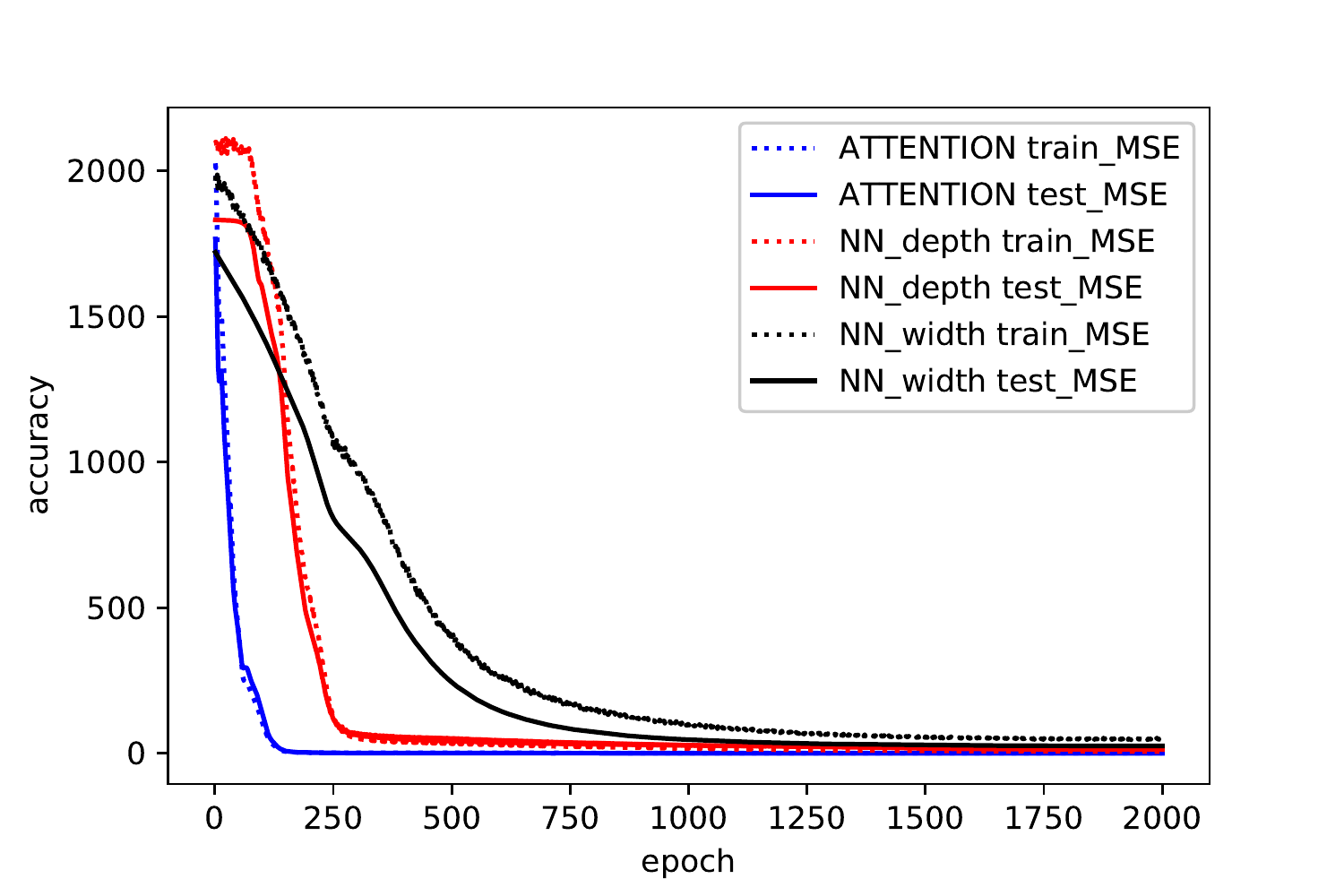}
	\caption{A comparison of the convergence speed and generalization gap between our single-head self-attention model and two types of fully connected neural networks.}
	\label{fig-train}
\end{figure}
\paragraph{Experimental results} For the target polynomial $f_1^*$, Figure \ref{f1} demonstrates the strong power of learning polynomials of our proposed model. With only 41 free parameters, our single-head self-attention transformer can perfectly capture the target function by using noisy data. Due to the nature of piece-wise linear output function, both two types of fully connected neural networks obviously can not achieve comparable results with very few parameters. \\

For the target function $f_2^*$, Table \ref{t1} and Figure \ref{fig-train} also demonstrate the superior ability of our model to learn a complicated polynomial. Our single-head self-attention transformer is the only one that can fit the ground truth function exactly with good convergence speed. Moreover, our model has a much better generalization power than both two types of fully connected neural networks with a similar number of free parameters.
\begin{table}[t]
\caption{A comparison of three models learning $f_2^*$. MSE$_{Tr}$ and MSE$_{Te}$ stand for the mean-squared error of the training data and the testing data after 2000 epochs training, respectively. We say that a model achieves convergence if the absolute difference of MSE$_{Tr}$ of two consecutive epochs is less than 0.01. \# EPOCHS stands for the number of epochs the model used before achieving convergence, and RUN TIME represents the corresponding running time of the training process. }
\label{t1}
\vskip 0.15in
\begin{center}
\begin{small}
\begin{sc}
\begin{tabular}{lcccc}
\toprule & MSE$_{Tr}$ & MSE$_{Te}$ & \# epochs  & Run time\footnote{GPU * min on NVIDIA A100 Tensor Core GPU.} \\

Attention& \textbf{0.938} & \textbf{0.109} & \textbf{212} & \textbf{1.9}\\
NN$_{depth}$ & 5.884 & 103.916 & 956& 7.6\\
NN$_{width}$ & 50.282 & 35.662 & 329 & 2.4\\
\bottomrule
\end{tabular}
\end{sc}
\end{small}
\end{center}
\vskip -0.1in
\end{table}

\section{Proof of Main Results}

\begin{proof}[Proof of Lemma \ref{main result:one layer}]
We present explicit constructions of matrices and biases in single-head self-attention transformer encoder block. We let $W^Q \in \mathbb{R}^{(1+n) \times (2+n+q)}$ as follows,
\begin{equation*}
W^{Q}=\left[\begin{array}{cccccccc}
{0} & {\cdots} & {0} & {1} & {0} & {\cdots} & {0}& {0} \\
{0} & {2M^2} & {0} & {0} & {\cdots} & {\cdots} & {0}& {0} \\
{0} & {0} & {2M^2} & {0} & {\ddots} & {\ddots} & {0}& {0} \\
{0} & {0} & {0} & {\ddots} & {\ddots} & {\cdots} & {0}& {0} \\
{0} & {0} & {0} & {0} & {2M^2} & {0} & {\cdots}& {0}
\end{array}\right],
\end{equation*}
where the constant $1$ in the first row is in the $(n+2)-$th column. And we set $W^Q_{(t,t)} = 2M^2$ for $t=2,\cdots,n+1$ and all the other elements $0$. Since the inputs are in the form of
$$z_i = [t_i, e_i, \overbrace{x_i,y_i,0,\cdots,0}^q ,1 ]^\top \in \mathbb{R}^{(n+q+2) \times 1},$$
where $e_i$ denotes the one-hot vector of dimension $n$ with value $1$ in the $i$-th position of $e_i$. Then we know that $q_i\in \mathbb{R}^{(1+n)\times 1}$ is as follows
$$q_i=W^{Q} z_i = [x_i, \underbrace{0, \cdots, 0, \overbrace{2M^2}^{\text{$(i+1)$-th entry}}, 0, \cdots,0}_{n}]^\top .$$
We let $W^{K} \in \mathbb{R}^{(1+n) \times (2+n+q)}$ as follows
\begin{equation*}
W^{K}=\left[\begin{array}{cccccccc}
{0} & {\cdots} & {0} & {1} & {\cdots} & {\cdots} & {0}& {0} \\
{0} & {1} & {0} & {0} & {\cdots} & {\cdots} & {0}& {0} \\
{0} & {0} & {1} & {0} & {\ddots} & {\ddots} & {0}& {\vdots} \\
{0} & {0} & {0} & {\ddots} & {\ddots} & {\cdots} & {0}& {0} \\
{0} & {0} & {0} & {0} & {1} & {0} & {\cdots}& {0}
\end{array}\right].
\end{equation*}
where the constant $1$ in the first row is in the $(n+3)-$th column. And we set $W^K_{(t,t)} = 1$ for $t=2,\cdots,n+1$ and other elements $0$. Then we have $k_i \in \mathbb{R}^{(1+n)\times 1}$ as
$$k_i=W^{Q} z_i = [y_i, \underbrace{0, \cdots, 0, \overbrace{1}^{\text{$(i+1)$-th entry}}, 0, \cdots,0}_{n}]^\top .$$
We can easily find that for each $i$, if $j = i$, then $\langle q_i, k_j\rangle = x_iy_i+2M^2$. And if  $j \neq i$, then $\langle q_i, k_j\rangle = x_iy_j$. By the condition $\left|x_i\right| < M$ and $\left|y_i\right| < M$, clearly we have $x_iy_i+2B^2 > x_iy_j $.

Then the attention vector $\alpha_i$ is
$$\alpha_i= [\langle q_i,k_1\rangle, \cdots ,\langle q_i, k_i \rangle, \cdots,\langle q_i, k_n \rangle]^\top \in \mathbb{R}^{n\times 1}.$$
Since we apply the one hot maximum function to $\alpha_1$, then by the construction we have
$$\hat{\alpha}_i= [0, \cdots ,0, \overbrace{x_i y_i + 2B^2}^{i-\text{th entry}}, 0,\cdots,0]^\top \in \mathbb{R}^{n\times 1}.$$
For the matrix $W^V \in \mathbb{R}^{(n+q+2)\times(n+q+2)}$, we set
$$
W^V_{i,j}= \left\{
\begin{array}{rcl}
1,& &i=n+4, j=n+q+2,\\
0,& &\text{others.}
\end{array}
\right.
$$
Then we know that for $i=1,\cdots,n$,
$$W^V z_i = [0,\cdots,0, \overbrace{1}^{(n+4)- \text{th entry}}, 0,\cdots,0 ]^\top.$$
By the equation
$$\hat{z}_i = z_i + W^V Z\hat{\alpha}_i,$$
we know that the outputs $z_i \in \mathbb{R}^{(n+q+2) \times 1}$ of self-attention layer are
$$\hat{z}_i = [t_i, e_i, \overbrace{x_i,y_i,x_iy_i+2M^2,0,\cdots,0}^q ,1 ]^\top ,$$
where $e_i$ denotes the one-hot vector of dimension $n$ with value $1$ in the $i$-th position of $e_i$.

Now we construct the fully connected layer in the transformer.
For $W_1 \in \mathbb{R}^{2 \times (2+n+q)}$, we let
$$
W_{1,(i,j)}= \left\{
\begin{array}{rcl}
1,& &i=1, j=n+4,\\
-1,& &i=2, j=n+4,\\
0,& &\text{others.}
\end{array}
\right.
$$
and $b_1= [0,0]^\top$. Then we have
$$\sigma(W_1 z_i +b_1) = [\sigma(x_iy_i+2M^2),\sigma(-x_iy_i - 2M^2)]^\top.$$
For $W_2 \in \mathbb{R}^{(n+q+2) \times 2}$, we let
$$
W_{2,(i,j)}= \left\{
\begin{array}{rcl}
-2,& &i=n+4, j=1,\\
2,& &i=n+4, j=2,\\
0,& &\text{others.}
\end{array}
\right.
$$
And we let $b_2 \in \mathbb{R}^{(n+q+2) \times 1}$ to be
$$
b_{2,(i)}= \left\{
\begin{array}{rcl}
2M^2,& &i=n+4,\\
0,& &\text{others.}
\end{array}
\right.
$$
Then by
$$z'_i = \hat{z}_i + W_2 \sigma\left(W_1\hat{z_1}+b_1\right) +b_2,$$
we have
$$z'_i = [t_i,e_i, \overbrace{x_i,y_i,-x_iy_i,0,\cdots,0}^q ,1 ]^\top \in \mathbb{R}^{(n+q+2) \times 1}.$$
Since we assume that $M$ is known, we do not have any free parameter in this construction. It is easy to see that the number of non-zero entry is $2n+8$. This finishes the proof.
\end{proof}
Now we are ready to prove Theorem \ref{main result:polynomial}.
\begin{proof}[Proof of Theorem \ref{main result:polynomial}]
To prove our main result on polynomial generation, we first apply Lemma \ref{lemma: polynomial}. Since the matrix $F \in \mathbb{R}^{n_q\times d}$ can be obtained by training, we set $F= [\xi_1, \cdots, \xi_{n_q} ]^\top$ and let $\xi_i$ to be those vectors we need in Lemma \ref{lemma: polynomial} for $i=1,\cdots,n_q$. Then we know that the inputs for the transformer encoder blocks are
$$z_i = [\xi_i \cdot x, \overbrace{0,\cdots,0,\underbrace{1}_{(i+1)-\text{entry}},0,\cdots,0}^{n}, \overbrace{0,\cdots,0}^q ,1 ]^\top,$$
for $i=1,\cdots,n_q$. Then we only need to apply Lemma \ref{main result:one layer} $q$ times with suitable adjustments of the position of non-zero entries to make sure that the product of two elements in vectors are saved in a right entry.

For the first transformer encoder block, we calculate the product of $\xi_i \cdot x$ and $1$ and place $-\xi_i \cdot x$ it in the $(n_q+2)-$th entry. Since we know that $\left\|\xi_i\right\|=1$, if we further assume that $\left\|x\right\| < B$, then we have $\left|\xi_i \cdot x\right| \leq B$. Then we only need to set $M=B$ in Lemma \ref{main result:one layer} and the output vectors are
$$z_i = [\xi_i \cdot x, e_i, \overbrace{-\xi_i \cdot x,\cdots,0}^q ,1 ]^\top,$$
where $e_i$ denotes the one-hot vector of dimension $n$ with value $1$ in the $i$-th position of $e_i$.
For the second transformer encoder block, we calculate the product of $\xi_i \cdot x$ and $-\xi_i \cdot x$ to get $\left(\xi_i \cdot x\right)^2$ and place it in the $(n_q+3)-$th entry. We set $M=B$ in Lemma \ref{main result:one layer} and the output vectors are
$$z_i = [\xi_i \cdot x, e_i, \overbrace{-\xi_i \cdot x,\left(\xi_i \cdot x\right)^2,\cdots 0}^q ,1 ]^\top,$$
Without loss of generality, we set $q$ to be odd. For the $i$-th block with $i=3,\cdots,q$, we set $M=B^{i-1}$. Then after $q$ transformer encoder blocks, the outputs are
$$z_i = [t_i, e_i, \overbrace{-t_i,t_i^2,\cdots,-t_i^q}^q ,1 ]^\top \in \mathbb{R}^{(n+q+2) \times 1},$$
where $t_i = \xi_i \cdot x$ for $i=1,\cdots,n_q$. Now we have different powers of $\xi_i \cdot x$ for $i=1,\cdots,n_q$. Then we only need to set elements of $\beta$ as those $\beta_{k,s}$ we need in Lemma \ref{lemma: polynomial} and $b = Q(0)$ to generate the polynomial $Q$ we want.

Since we assume that $B$ is known, then there is no free parameter in transformer encoder blocks. The free parameters in our model all come from $F$, $\beta$ and $b$. By $n_q = \binom{d-1+q}{q}$, it is easy to see that $n_q \leq d^q$. The number of free parameters in $F$ is less then $d^{q+1}$. Since for each $z_i$, we only need $q$ non-zero entries in $\beta$, the number of free parameters in $\beta$ is less then $qd^q$. So the total number of free parameters is less than $d^{q+1}+ qd^q +1$.

The number of non zero entries in this model is those in $F$, $W^K$, $W^Q$, $W^V$, $W_1$, $W_2$, $b_1$ $b_2$ in each block and in $\beta$, $b$. It can be calculated easily to know the number of non zero entries is less than $d^{q+1} +3qd^q + 8q  +1.$

This finishes the proof.
\end{proof}

\section{Conclusion}
In this paper, we introduced a single-head self-attention transformer model and showed that any polynomial can be generated exactly by an output function of such a model with the number of transformer encoder blocks equal to the degree of the polynomial. The transformer encoder blocks in this model do not need to be trained.

In the future, many research directions will be very attractive. First of all, our core idea is different from traditional one of approximation, and through the appropriate adjustment of the transformer model, a completely new theoretical result is presented. Also, in our structure, the transformer encoder blocks are completely fixed, it is of great interest to check our results in real applications to see whether these adaptations can indeed bring benefits. Second, we have obtained such exciting theoretical results by considering only single-head self-attention structure. We can consider whether the multi-head structure can lead to more surprising conclusions. Last but not least, it is of great interest to consider this model under the setting of statistical machine learning. As we can see in our experiments, as long as the number of free parameters meets the theoretical requirement, our model can not only learn the objective function well, but also has a much stronger generalization ability than other models. And as far as we are concerned, this is the first deep learning model which is capable of reaching zero approximation error for certain function class. We will investigate how such a model affects convergence rates for regression or classification problems if the target function is a polynomial of the input and we will verify whether convergences rates now only depend on the complexity of the proposed model.

\clearpage
\nocite{langley00}

\bibliography{thesisbib_2}
\bibliographystyle{plain}

\newpage
\appendix
\onecolumn
\section{Experimental Details}
In this section we describe the additional details our experiments.

\subsection{Model architectures}
\label{app_exp}
Table \ref{arc-f1} and \ref{arc-f2} illustrate the architecture of two types of ReLU fully connected neural networks with a comparable number of free parameters used in Section \ref{experiments}. The NN$_{width}$ has the same kind of linear transformation from $\mathcal{R}^d \to \mathcal{R}^{n_q}$ as our single-head self-attention transformer, while the NN$_{depth}$ has the same hidden layer $q+1$ as our single-head self-attention transformer.

\begin{table}[h!]
  \caption{The architecture of NN$_{width}$ and NN$_{depth}$ for the target polynomial $f^*_1$.}
\vskip 0.15in
\begin{center}
\begin{small}
\begin{sc}
  \begin{tabular}{l l l l}
\toprule
    \multicolumn{1}{l}{Layer} & \multicolumn{1}{l}{NN\_{width}} & NN\_{depth} \\
\midrule
    1     & \multicolumn{1}{l}{Linear(in=2,out=10)} & Linear(in=2,out=4) \\
\midrule
    2     & \multicolumn{1}{l}{Relu} & Relu \\
\midrule
    3     & \multicolumn{1}{l}{Linear(in=10,out=1)} & Linear(in=4,out=4) & \rdelim\}{2}{1mm}[$\times $ 2]       \\
\midrule
    4     &       & Relu \\
\midrule
$\cdots$ & & \\
\midrule
    7    &       & Linear(in=4,out=1) \\
\bottomrule
  \end{tabular}
  \label{arc-f1}
\end{sc}
\end{small}
\end{center}
\vskip -0.1in
\end{table}

\begin{table}[h!]
  \caption{The architecture of NN$_{width}$ and NN$_{depth}$ for the target polynomial $f^*_2$.}
\vskip 0.15in
\begin{center}
\begin{small}
\begin{sc}
  \begin{tabular}{l l l l}
\toprule
    \multicolumn{1}{l}{Layer} & \multicolumn{1}{l}{NN\_{width}} & NN\_{depth} \\
\midrule
    1     & \multicolumn{1}{l}{Linear(in=10,out=4368)} & Linear(in=10,out=120) \\
\midrule
    2     & \multicolumn{1}{l}{Relu} & Relu \\
\midrule
    3     & \multicolumn{1}{l}{Linear(in=4368,out=1)} & Linear(in=120,out=120) & \rdelim\}{2}{1mm}[$\times $ 5]         \\
\midrule
    4     &       & Relu \\
\midrule
$\cdots$ & & \\
\midrule
    13    &       & Linear(in=120,out=1) \\
\bottomrule
  \end{tabular}
  \label{arc-f2}
\end{sc}
\end{small}
\end{center}
\vskip -0.1in
\end{table}
\end{document}